\documentclass[11pt]{article}

\usepackage[letterpaper,margin=1in]{geometry}

\usepackage[T1]{fontenc}
\usepackage[utf8]{inputenc}
\usepackage{lmodern}
\usepackage[hyphens]{url}
\usepackage{amsmath}
\usepackage{amssymb}
\usepackage{amsthm}

\usepackage{multirow}
\usepackage[table]{xcolor}
\usepackage{graphicx}
\usepackage{booktabs}

\usepackage{natbib}
\usepackage{caption}

\usepackage{algorithm}
\usepackage{algorithmicx}
\usepackage{algpseudocode}

\usepackage{hyperref}
\hypersetup{
    colorlinks=true,
    linkcolor=blue!50!black,
    citecolor=blue!50!black,
    urlcolor=blue!50!black,
}

\usepackage{cleveref}

\theoremstyle{plain}
\newtheorem{proposition}{Proposition}
\newtheorem{corollary}{Corollary}
\newtheorem{lemma}{Lemma}

\crefname{proposition}{proposition}{propositions}
\Crefname{proposition}{Proposition}{Propositions}
\crefname{corollary}{corollary}{corollaries}
\Crefname{corollary}{Corollary}{Corollaries}
\crefname{lemma}{lemma}{lemmas}
\Crefname{lemma}{Lemma}{Lemmas}

\usepackage{xspace}
\newcommand{\JOT}{\textsc{JoT}\xspace}

\title{Just on Time: Token-Level Early Stopping for Diffusion Language Models}

\author{
    Zakhar Kohut, Severyn Shykula, Mykola Vysotskyi, Serhii Dmytryshyn, Dmytro Khamula, \\
    Michal Zakrzewski, Damian Rynczak, Jacek Ma{\l}ecki, Taras Rumezhak, Volodymyr Karpiv \\[1em]
    SoftServe Inc.
}
\date{}

\begin{document}

\maketitle

\begin{abstract}
Diffusion language models generate text through iterative refinement, a process that is often computationally inefficient because many tokens reach stability long before the final denoising step. We introduce a training-free, token-level early stopping approach that identifies convergence independently at each position. Our method leverages lightweight signals derived from the model's predictions and local context to dynamically determine when individual tokens can be finalized. This yields adaptive per-token freezing without task-specific fine-tuning, substantially reducing the total number of diffusion steps required. Across diverse benchmarks, spanning mathematical reasoning, general question answering, and scientific understanding, our approach achieves substantial efficiency gains while preserving generation quality.
\end{abstract}

\section{Introduction}

Diffusion language models (DLMs) have emerged as a compelling alternative to autoregressive generation \citep{austin2023structureddenoisingdiffusionmodels}. By starting from a fully masked sequence and iteratively denoising, DLMs enable parallel token prediction and bidirectional context integration, achieving competitive performance on diverse tasks \citep{nie2025largelanguagediffusionmodels,ye2025dream7bdiffusionlarge}. However, \emph{decoding efficiency} remains a challenge: generation requires many refinement steps, yet many tokens converge to stable predictions well before the final step \citep{li2025diffusionlanguagemodelsknow}, leading to unnecessary computation.

We introduce \JOT (\textbf{J}ust \textbf{o}n \textbf{T}ime), a training-free method for per-token early stopping in DLMs. Rather than applying a global stopping criterion, \JOT monitors prediction confidence at each position independently and finalizes tokens once they exceed a spatially-adaptive threshold. This allows different positions to exit parallely at different steps, concentrating computation where it is needed.

We evaluate \JOT on Dream-7B-Instruct and LLaDA-8B-Instruct across four benchmarks: GSM8K, MMLU, HellaSwag, and HumanEval. Our experiments demonstrate that \JOT achieves favorable speed-quality trade-offs, providing up to $5.5\times$ speedup on GSM8K and $19.6\times$ on HumanEval while maintaining scores within ${\sim}3$ percentage points of full decoding on most benchmarks (the largest gap being $-3.1$ points on LLaDA HumanEval). Ablation studies confirm that both threshold selection and spatial modulation contribute to performance.

\paragraph{Contributions.} We summarize our main contributions as follows:
\begin{itemize}
    \item We propose \JOT, a training-free, per-token early stopping method for diffusion language models that adapts acceptance thresholds based on spatial proximity to resolved context.
    \item We conduct comprehensive experiments on Dream-7B and LLaDA-8B across four benchmarks, analyzing speed-quality trade-offs against existing early-exit methods.
    \item We provide detailed ablation studies isolating the effects of threshold selection and spatial modulation, offering practical guidance for hyperparameter configuration.
\end{itemize}

\section{Related Work}
\label{sec:related}

\paragraph{Diffusion Language Models.} Diffusion models have achieved remarkable success in continuous domains such as image and audio generation \citep{ho2020denoisingdiffusionprobabilisticmodels, song2021scorebasedgenerativemodelingstochastic}, and extending them to discrete text has followed two paths: embedding-based methods that project tokens into continuous space before applying standard diffusion \citep{li2022diffusionlmimprovescontrollabletext, gong2023diffuseqsequencesequencetext}, and discrete diffusion over the vocabulary directly. D3PM \citep{austin2023structureddenoisingdiffusionmodels} established a foundational discrete framework using Markov chains with learnable transition matrices, and SEDD \citep{lou2024discretediffusionmodelingestimating} connected score-based and likelihood-based training for discrete sequences.
Among discrete formulations, masked (absorbing-state) diffusion---where the forward process replaces tokens with a mask token and the reverse process predicts the originals---has proven particularly effective. MDLM \citep{sahoo2024simpleeffectivemaskeddiffusion} derived a simplified objective connecting to BERT-style masked language modeling \citep{devlin2019bertpretrainingdeepbidirectional}. LLaDA \citep{nie2025largelanguagediffusionmodels} scaled this paradigm to 8B parameters with competitive autoregressive performance, and Dream 7B \citep{ye2025dream7bdiffusionlarge} further advanced the state of the art with autoregressive initialization and context-adaptive noise rescheduling.

\paragraph{Early Stopping for Diffusion Decoding.} Recent work has observed that predictions often stabilize well before the final diffusion step, motivating early-exit strategies. Prophet \citep{li2025diffusionlanguagemodelsknow} documents this ``early answer convergence'' and proposes a \emph{global} early-commit rule: once the top-2 confidence gap exceeds a threshold, all remaining masked positions are finalized simultaneously. In contrast, KLASS \citep{kim2025klassklguidedfastinference} operates at the \emph{per-token} level, using KL divergence between consecutive step distributions to identify individually stable predictions and unmask them in parallel. Orthogonal to step reduction, caching methods, such as dKV-Cache or D2F, target per-step latency \citep{ma2025dkvcachecachediffusionlanguage, wang2025diffusionllmsfasterthanarinference}.

In the terminology of parallel-decoding work, \JOT performs \emph{confidence-gated parallel unmasking}: like Fast-dLLM \citep{wu2025fastdllmtrainingfreeaccelerationdiffusion}, it finalizes masked positions whose confidence clears a gate, but with a spatially adaptive, temperature-invariant gate rather than a fixed probability threshold. Throughout the paper we use \emph{early exit} for this mechanism: a token exits iterative refinement as soon as its prediction is deemed stable. This is distinct from methods that stop computation on \emph{already-decoded} tokens \citep{oba2025decodedtokens}, which target a different inefficiency: redundant computation on finalized positions rather than premature refinement of unfinalized ones.
\section{Preliminaries}
\label{sec:preliminary}
\subsection{Discrete Diffusion Language Models}

Consider a vocabulary $\mathcal{V}$ of size $K$ augmented with a mask token $\texttt{[M]}$. A sequence of length $L$ is denoted $\mathbf{x} = (x^1, \ldots, x^L)$ with $x^i \in \mathcal{V}$. 

Autoregressive language models factorize the joint distribution as $p(\mathbf{x}) = \prod_{i=1}^{L} p(x^i \mid x^{<i})$ and generate tokens sequentially from left to right. In contrast, discrete diffusion models define a generative process through the interplay of a forward corruption process and a learned reverse denoising process, enabling parallel refinement of all positions simultaneously.

\paragraph{Forward process.} The forward process progressively corrupts a clean sequence $\mathbf{x}_0$ by independently replacing tokens with the mask token. Let $t \in [0, 1]$ denote the continuous noise level, where $t = 0$ corresponds to clean data and $t = 1$ to the fully masked state. The forward marginal factorizes over positions as $q(\mathbf{x}_t \mid \mathbf{x}_0) = \prod_{i=1}^{L} q(x_t^i \mid x_0^i)$, where
\begin{equation}
    q(x_t^i \mid x_0^i) = \alpha_t \, \delta(x_t^i = x_0^i) + (1 - \alpha_t) \, \delta(x_t^i = \texttt{[M]}).
\end{equation}
Here, $\alpha_t \in [0, 1]$ is a monotonically decreasing noise schedule with $\alpha_0 \approx 1$ and $\alpha_1 \approx 0$.

\paragraph{Reverse process.} Generation proceeds by reversing the forward corruption: starting from a fully masked sequence $\mathbf{x}_T$ at $t = 1$, the model iteratively predicts and unmasks tokens until reaching the clean state at $t = 0$. A neural network $f_\theta$ parameterizes the reverse transition by producing a distribution over the vocabulary at each position:
\begin{equation}
    p_\theta(\mathbf{x}_{s} \mid \mathbf{x}_t) = \prod_{i=1}^{L} p_\theta(x_{s}^i \mid \mathbf{x}_t), \quad 0 \leq s < t \leq 1.
\end{equation}
For masked positions, the model predicts $p_\theta(x_s^i \mid \mathbf{x}_t) = \mathrm{Cat}(x_s^i; \pi_\theta^i(\mathbf{x}_t))$, where $\pi_\theta^i(\mathbf{x}_t) \in \Delta^{K}$ is the predicted distribution at position $i$. Unmasked tokens are preserved.

\subsection{Iterative Sampling}

Practical generation proceeds through $T$ discrete steps. At step $n$, let $\mathcal{M}_n = \{i : x_{t_n}^i = \texttt{[M]}\}$ denote the masked positions. The model computes predictions $\pi_\theta^i(\mathbf{x}_{t_n})$ for all $i \in \mathcal{M}_n$, then selects a subset $\mathcal{U}_n \subseteq \mathcal{M}_n$ to unmask based on a transfer schedule. A common strategy unmasks positions with the highest confidence $\max_{v} \pi_\theta^i(\mathbf{x}_{t_n})_v$, deferring uncertain positions to later steps.

\subsection{Training Objective}

DLMs minimize a variational upper bound on negative log-likelihood, which simplifies to a reweighted cross-entropy over masked positions:
\begin{equation}
    \mathcal{L}(\theta) = -\mathbb{E}_{\mathbf{x}_0, t, \mathbf{x}_t} \left[ w(t) \sum_{i=1}^{L} \mathbf{1}[x_t^i = \texttt{[M]}] \log p_\theta(x_0^i \mid \mathbf{x}_t) \right],
    \label{eq:training_loss}
\end{equation}
where $t \sim \mathcal{U}(0, 1)$ and $w(t)$ is a time-dependent weight, commonly $w(t) = 1/t$.

\section{Approach}
\label{sec:approach}

We present \JOT (\textbf{J}ust \textbf{o}n \textbf{T}ime), a training-free method for per-token early stopping in diffusion language models. The core idea is to finalize individual token predictions as soon as they exhibit sufficient confidence, rather than waiting for a fixed number of diffusion steps. Our approach adapts the acceptance threshold at each position based on spatial proximity to already-resolved tokens.

\subsection{Overview}
Standard DLM decoding runs for a predetermined number of steps $T$, unmasking tokens according to a transfer schedule. However, prior work has shown that predictions often stabilize well before the final diffusion step \citep{li2025diffusionlanguagemodelsknow}. \JOT exploits this observation by introducing an adaptive early-exit mechanism that operates at the token level: at each step, positions whose predictions exceed a dynamically computed confidence threshold are finalized immediately, removing them from further refinement.

The key components of our method are: (i) a \emph{confidence metric} that quantifies prediction certainty at each masked position; (ii) a \emph{spatial modulation} that lowers the threshold for positions adjacent to already-unmasked tokens.

\paragraph{Confidence Metric.} We measure prediction confidence using the ratio between the top two predicted probabilities. At each masked position $i$, the model produces logits $\ell^i \in \mathbb{R}^K$, which we convert to a probability distribution via softmax without temperature scaling: $\pi_\theta^i(\mathbf{x}_t) = \mathrm{softmax}(\ell^i) \in \Delta^K$. By using unscaled logits, our confidence metric remains invariant to any temperature parameter applied during sampling, decoupling early-exit decisions from generation diversity settings. Denote the largest and second-largest probabilities as $p_1^i$ and $p_2^i$, respectively. The confidence score is defined as:
\begin{equation}
    r^i = \frac{p_1^i}{p_2^i + \epsilon},
    \label{eq:confidence_ratio}
\end{equation}
where $\epsilon > 0$ is a small constant for numerical stability. This ratio captures how decisively the model favors its top prediction: $r^i \approx 1$ indicates uncertainty between alternatives, while $r^i \gg 1$ signals strong commitment to the leading candidate. Equivalently (neglecting $\epsilon$), $r^i$ is a \emph{logit margin}: $p_1^i/p_2^i = \exp(\ell^i_{(1)} - \ell^i_{(2)})$, where $\ell^i_{(1)}, \ell^i_{(2)}$ are the two largest logits, so the test $r^i \ge \tau$ asks for a top-2 logit gap of at least $\ln\tau$---about $4.5$ for $\tau = 90$ and $3.4$ for $\tau = 30$. This makes the temperature invariance concrete: a logit-margin threshold calibrated once transfers across sampling temperatures, whereas a probability threshold must be recalibrated per temperature---Fast-dLLM's threshold, calibrated for greedy decoding, drops from $53.7$ to $0.6$ pass@1 on Dream HumanEval at $T=0.1$ in our measurements. We claim transferability of the threshold, not identical decoding behavior across temperatures.

\paragraph{Spatial Modulation.} Recent work on diffusion language model training has shown that positions with stronger local context---particularly those near already-decoded tokens---tend to experience lower noise level \citep{ye2025dream7bdiffusionlarge}. Dream's CART weighting exploits this during training by applying context-adaptive noise rescheduling at the token level. We draw inspiration from this finding and apply analogous reasoning to inference: positions adjacent to already-unmasked tokens benefit from richer local context, making their predictions more reliable even at earlier steps. We capture this through a spatial softening factor based on proximity to resolved positions.

Let $\mathcal{M}_n$ denote the set of masked positions at step $n$. The spatial weight at position $i$ is computed using a geometric kernel over a window of radius $D$:
\begin{equation}
    w^i = \sum_{\substack{j \notin \mathcal{M}_n \\ |i-j| \leq D}} \gamma^{|i - j|}
    \label{eq:spatial_weight}
\end{equation}
where $\gamma \in (0, 1)$ is the decay rate controlling how quickly influence diminishes with distance. The spatial softening factor is the normalized weight:
\begin{equation}
    \phi^i = \min\left(1, \frac{w^i}{w_{\max}}\right),
    \label{eq:spatial_softening}
\end{equation}
where $w_{\max} = 2 \sum_{d=1}^{D} \gamma^d$ is the maximum possible weight. Positions at the boundary of masked regions receive higher $\phi^i$, reflecting their enhanced contextual support.

\paragraph{Adaptive Threshold.} The spatial softening factor modulates a position-specific acceptance threshold. The threshold at position $i$ is interpolated between a maximum $\tau_{\max}$ and minimum $\tau_{\min}$:
\begin{equation}
    \tau^i = \tau_{\max} - (\tau_{\max} - \tau_{\min}) \cdot \phi^i.
    \label{eq:threshold}
\end{equation}
Positions near unmasked tokens (high $\phi^i$) receive lower thresholds, making them easier to finalize. Positions in the interior of masked regions retain the stricter threshold $\tau_{\max}$, requiring higher confidence before commitment.

\paragraph{Early-Exit Decision.} At each step $n$, a masked position $i \in \mathcal{M}_n$ is finalized if its confidence exceeds the adaptive threshold:
\begin{equation}
    r^i \geq \tau^i.
    \label{eq:decision}
\end{equation}
When this condition is met, position $i$ is finalized and removed from the set of masked positions. The finalized token can be selected via argmax, $\hat{x}^i = \arg\max_v \pi_\theta^i(\mathbf{x}_{t_n})_v$, or sampled from the predicted distribution $\pi_\theta^i(\mathbf{x}_{t_n})$; we use argmax in our experiments but the method is compatible with sampling-based decoding. Positions that do not meet the threshold continue through the standard transfer schedule.

The complete procedure is summarized in \cref{alg:jot}.

\begin{algorithm}[tb]
    \caption{\JOT: Token-Level Early Stopping}
    \label{alg:jot}
    \begin{algorithmic}
        \State {\bfseries Input:} model $f_\theta$; prompt $\mathbf{x}_{\mathrm{prompt}}$; generation length $L$; steps $T$
        \State {\bfseries Parameters:} $\tau_{\max}, \tau_{\min}, \gamma, D$
        \State Initialize $\mathbf{x} \leftarrow [\mathbf{x}_{\mathrm{prompt}}; \texttt{[M]} \times L]$
        \For{$n = 1$ {\bfseries to} $T$}
            \State $\mathcal{M}_n \leftarrow \{i : x^i = \texttt{[M]}\}$
            \If{$\mathcal{M}_n = \emptyset$}
                \State {\bfseries break}
            \EndIf
            \State Compute logits $\ell \leftarrow f_\theta(\mathbf{x})$
            \State Compute probabilities $\pi^i \leftarrow \mathrm{softmax}(\ell^i)$ for $i \in \mathcal{M}_n$
            \State Compute confidence $r^i \leftarrow p_1^i / (p_2^i + \epsilon)$
            \For{$i \in \mathcal{M}_n$}
                \State Compute $\phi^i$ via \cref{eq:spatial_softening}
                \State $\tau^i \leftarrow \tau_{\max} - (\tau_{\max} - \tau_{\min}) \cdot \phi^i$
                \If{$r^i \geq \tau^i$}
                    \State $x^i \leftarrow \arg\max_v \pi^i_v$ \hfill \Comment{Finalize position}
                \EndIf
            \EndFor
            \State Apply transfer schedule to remaining masked positions
        \EndFor
        \State {\bfseries return} $\mathbf{x}$
    \end{algorithmic}
\end{algorithm}

\subsection{Discussion}

\paragraph{Computational overhead.} The spatial weight computation in \cref{eq:spatial_weight} can be implemented efficiently via a 1D convolution with a precomputed kernel, adding only a small per-step overhead (quantified by the wallclock analysis in \cref{app:wallclock}). The dominant cost remains the forward pass through $f_\theta$, which \JOT reduces by terminating early when all positions are finalized.

\paragraph{Interaction with transfer schedules.} \JOT operates alongside the standard transfer schedule rather than replacing it. At each step, positions that do not meet the early-exit criterion remain masked and are subject to the model's default unmasking schedule: the transfer schedule determines how many of these remaining positions to reveal based on model predictions. If no positions satisfy the confidence threshold, the step proceeds exactly as in standard decoding. This ensures that the method gracefully degrades to baseline behavior when predictions remain uncertain.

\paragraph{Implementation details.} We use $\epsilon = 10^{-12}$ in \cref{eq:confidence_ratio} and batch size $1$ throughout. The spatial kernel treats prompt tokens as unmasked: the convolution runs over the full canvas, so generation-initial positions adjacent to the prompt receive softened thresholds from the start. On LLaDA, \JOT operates within the current semi-autoregressive block only. The early-exit mask is applied on top of the transfer schedule (a logical OR) and is uncapped: if every remaining position clears its threshold, \JOT finalizes all of them in a single step.

\paragraph{Orthogonality with other acceleration methods.} \JOT reduces the \emph{number} of steps, whereas KV-caching methods such as dKV-Cache \citep{ma2025dkvcachecachediffusionlanguage} and D2F \citep{wang2025diffusionllmsfasterthanarinference} reduce the cost of \emph{each} step, so the two compose; we measure one such integration (\JOT inside Fast-dLLM's DualCache) in \cref{sec:experiments}.
\section{Why Early Exit Preserves Quality}
\label{sec:theory}

\JOT was motivated empirically. Here we give a short analysis that answers two questions: how much quality can be lost by committing a token once its confidence ratio exceeds $\tau$, and why a small gap remains even at very high thresholds.

\paragraph{Setup.}
Fix a masked position $i$ and, for the analysis, consider a reference continuation in which position $i$ remains masked while the other tokens are progressively revealed. Let $\mathcal{F}_n$ denote everything revealed by step $n$ (the prompt and all unmasked tokens). Write
    \begin{equation}
        M_n(v) \;=\; \mathbb{P}\!\left(x^i = v \mid \mathcal{F}_n\right)
        \label{eq:posterior}
    \end{equation}
for the true probability that position $i$ equals $v$ given the current context. As more tokens are revealed, $M_n(v)$ is a martingale: its current value equals the expectation of its future values, because a correctly updated probability has no predictable drift. This single property drives the analysis. We make one assumption: at the moment of commitment, the model's prediction matches the true conditional, $\pi^i_\theta = \mathbb{P}(x^i = \cdot \mid \mathcal{F}_n)$. This is exactly what the training objective in \cref{eq:training_loss} optimizes for---masked diffusion models are trained to predict the conditional distribution of clean tokens given an arbitrary partially masked context \citep{ou2026absorbingdiscretediffusionsecretly, zheng2025maskeddiffusionmodelssecretly}. We return to what happens when the assumption fails below.

\paragraph{Early exit and generation order.}
By the chain rule, a joint distribution factorizes identically under every generation order. As a consequence, a sequential decoder that commits one token at a time by sampling from the corresponding correct conditional and updates the context after each commitment produces the correct output distribution \emph{no matter which token it chooses to commit next}---even if the choice depends on the model's own confidences (\cref{lem:order} in \cref{app:proofs}).

\JOT instead fixes the current $\arg\max$, possibly at several positions in the same step. A central position-wise source of instability is that later context might have caused the current runner-up to overtake the selected token. The next result bounds how often that happens.

\begin{proposition}[How often can a committed token flip?]
    \label{prop:flip}
    Suppose position $i$ is committed at step $n$ with confidence ratio $r^i \ge \tau$, with top prediction $\hat v$ and runner-up $u$. Under the calibration assumption, and conditioned on $x^i$ lying in the leading pair $\{\hat v, u\}$,
        \begin{equation*}
            \mathbb{P}\!\left(\exists\, m \ge n:\; M_m(u) \ge M_m(\hat v) \,\middle|\, \mathcal{F}_n\right)
\;\le\; \frac{2}{1+\tau}.
        \end{equation*}
    Hence, under the same leading-pair condition, summing over at most $L$ committed positions gives a leading-pair reversal budget of at most $2L/(1+\tau)$.
\end{proposition}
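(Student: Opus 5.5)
The plan is to reduce the event to a maximal-inequality statement about a single bounded martingale. Conditioned on $x^i \in \{\hat v, u\}$ (and on $\mathcal{F}_n$), define the conditional share
\begin{equation*}
    Z_m \;=\; \frac{M_m(u)}{M_m(\hat v) + M_m(u)} \;=\; \mathbb{P}\!\left(x^i = u \,\middle|\, \mathcal{F}_m,\; x^i \in \{\hat v,u\}\right), \qquad m \ge n .
\end{equation*}
The second expression is the key point. Under the leading-pair conditioning, $Z_m$ is itself a Doob martingale with respect to $(\mathcal{F}_m)_{m\ge n}$: it is the conditional expectation of the fixed indicator $\mathbf{1}[x^i = u]$ under the conditioned measure. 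It takes values in $[0,1]$. The event $\{M_m(u) \ge M_m(\hat v)\}$ coincides exactly with $\{Z_m \ge 1/2\}$ whenever the denominator is positive. If the denominator vanishes, the conditioning event has probability zero given $\mathcal{F}_m$, so such paths are null under the conditioned measure.

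First I would compute the initial value. By the calibration assumption, $M_n(\cdot) = \pi^i_\theta$, so $Z_n = p_2^i/(p_1^i+p_2^i) = 1/(1+r^i)$. Neglecting $\epsilon$, as the paper does when it identifies $r^i$ with $p_1^i/p_2^i$, this gives $Z_n \le 1/(1+\tau)$. If one wants to keep $\epsilon$, the inequality $p_1^i/p_2^i \ge r^i$ only strengthens the bound.

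Next I would apply Ville's (Doob's) maximal inequality for the nonnegative martingale $Z_m$:
\begin{equation*}
    \mathbb{P}\!\left(\sup_{m\ge n} Z_m \ge \tfrac12 \,\middle|\, \mathcal{F}_n\right) \;\le\; \frac{Z_n}{1/2} \;=\; \frac{2}{1+r^i} \;\le\; \frac{2}{1+\tau}.
\end{equation*}
Since there are finitely many steps (at most $T$, or at most $L$ reveals), only the finite-horizon version is needed. Concretely, stop at the first $m$ with $Z_m \ge 1/2$ and apply optional stopping, using $Z \ge 0$. This gives the first claim. For the budget, I would sum over the at most $L$ committed positions using linearity of expectation. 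The expected number of leading-pair reversals is $\sum_i \mathbb{P}(\text{reversal at } i) \le L \cdot 2/(1+\tau)$, since each term is bounded as above conditioned on its own commitment step. No independence between positions is needed, and by the tower property the conditional bounds pass to unconditional ones.

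The main obstacle is making the conditioning rigorous. I need to justify that, after conditioning on $x^i \in \{\hat v,u\}$, the ratio $Z_m$ is genuinely a martingale, rather than merely a ratio of martingales, which in general is not one. Writing $Z_m$ as the conditional probability of a fixed event under the measure $\mathbb{P}(\cdot \mid x^i\in\{\hat v,u\})$ handles this, but I must check two things. First, the filtration $\mathcal{F}_m$ is generated by the reference continuation, in which position $i$ stays masked and the other tokens are revealed from the true joint. Second, the prefactor at time $n$ uses the conditioned initial value. This is exactly why the statement is phrased with the leading-pair condition.
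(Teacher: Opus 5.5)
Your proposal is correct and follows essentially the same route as the paper's proof. Both treat the runner-up's head-to-head share $M_m(u)/(M_m(u)+M_m(\hat v))$ as the conditional probability of a fixed event under the leading-pair conditioning, hence a nonnegative martingale; both bound its initial value by $1/(1+\tau)$ via calibration and $p_1^i/p_2^i \ge r^i$, apply Ville's inequality at level $\tfrac12$, and sum over positions. Your handling of null paths where the denominator vanishes and your finite-horizon optional-stopping derivation are just slightly more careful versions of the same argument.
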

The proof (\cref{app:proofs}) is short. At commit time the runner-up holds at most a $\frac{1}{1+\tau}$ share of the head-to-head probability against the leader. Because correctly updated probabilities form a martingale---a fair game---the chance that this share ever climbs to the $\tfrac{1}{2}$ required for a reversal is at most twice its current value (Ville's inequality; \citealp{ville1939etude}): the same reason a gambler holding \$1 in a fair game reaches \$45 with probability at most $1/45$. Spatial modulation is covered by the same leading-pair bound with the position-specific threshold: a token committed at $\tau^i$ contributes at most $2/(1+\tau^i)$.

\paragraph{The bound is consistent with the observed degradation.}
For GSM8K on Dream-7B ($L = 256$, $\tau_{\max} = 90$), \cref{prop:flip} gives a leading-pair reversal budget of at most $2 \cdot 256 / 91 \approx 5.6$ positions per sample---an overestimate, since most tokens commit with $r^i \gg \tau$. This is consistent with our measurements: on $500$ GSM8K samples, $87.4\%$ of outputs agree with full decoding on the final answer, and accuracy drops by $2.2$ points, a difference a McNemar exact test does not find significant ($p \approx 0.21$; \cref{app:failure_modes}).

The same scaling is also consistent with two side observations.
For short generations ($L = 3$ for MMLU, $L = 5$ for HellaSwag) the leading-pair reversal budget $2L/(1+\tau)$ is negligible at every threshold, matching the flat scores across $\tau$ in \cref{tab:threshold_ablation}; and LLaDA's block decoding caps the number of simultaneously masked positions at the block size, so that the same budget can be applied within each block, consistent with the empirical observation that a lower $\tau_{\max} = 30$ suffices for it.

\paragraph{Why quality does not fully recover at large $\tau$.}
\Cref{prop:flip} bounds the error of stopping too early under correct beliefs, and that error vanishes as $\tau$ grows. The residual gap at $\tau = 120$--$150$ (\cref{tab:threshold_ablation}) may therefore reflect failures of the calibration assumption itself. Splitting on whether the model's confidence can be trusted at commit time gives the following.

\begin{corollary}[Stopping error vs.\ calibration error]
    \label{cor:floor}
    Drop the calibration assumption, and let $q^*_n$ denote the true head-to-head share of the runner-up at commit time. Then for every $\kappa > 0$,
        \begin{equation}
            \mathbb{P}(\mathrm{flip}) \;\le\; \underbrace{\frac{2}{1+\kappa}}_{\text{stopping error}} \;+\; \underbrace{\mathbb{P}\!\left(r^i \ge \tau \;\text{ and }\; q^*_n > \tfrac{1}{1+\kappa}\right)}_{\text{calibration error }\, \varepsilon_{\mathrm{cal}}(\tau,\kappa)}.
            \label{eq:floor}
        \end{equation}
    If the model is calibrated, taking $\kappa = \tau$ recovers \cref{prop:flip}. In general, this decomposition does not imply that $\varepsilon_{\mathrm{cal}}$ decreases at the same $O(1/\tau)$ rate as the stopping term.
\end{corollary}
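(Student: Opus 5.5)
The plan is to split the flip event on whether the true runner-up share at commit time is small. Write $q^*_n = M_n(u)/(M_n(u)+M_n(\hat v))$ for the true head-to-head share of the runner-up given $\mathcal{F}_n$. Define the event $G = \{q^*_n \le \tfrac{1}{1+\kappa}\}$. Then
\begin{equation*}
\mathbb{P}(\mathrm{flip}) \le \mathbb{P}(\mathrm{flip} \cap G) + \mathbb{P}(G^c \cap \{r^i \ge \tau\}),
\end{equation*}
where the second term is exactly $\varepsilon_{\mathrm{cal}}(\tau,\kappa)$, because flips are only counted for committed positions, that is, when $r^i \ge \tau$. So everything reduces to bounding the first term by $2/(1+\kappa)$.

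For the first term, I reuse the martingale argument from the proof of \cref{prop:flip}. The key point is that this argument never uses the model's prediction. It only uses the true conditional share at step $n$. Conditioned on the leading pair, the process $Q_m = M_m(u)/(M_m(u)+M_m(\hat v))$ for $m \ge n$ is a nonnegative martingale. It is the conditional probability that $x^i = u$ given $\mathcal{F}_m$ and given $x^i \in \{\hat v,u\}$. A flip means that $Q_m \ge \tfrac12$ for some $m$. Ville's inequality gives
\begin{equation*}
\mathbb{P}\bigl(\sup_{m\ge n} Q_m \ge \tfrac12 \mid \mathcal{F}_n\bigr) \le 2Q_n.
\end{equation*}
On $G$ this is at most $2/(1+\kappa)$. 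Since $G$ is $\mathcal{F}_n$-measurable, we can integrate the conditional bound over $G$, which gives $\mathbb{P}(\mathrm{flip}\cap G) \le 2/(1+\kappa)$. Adding the two terms yields \cref{eq:floor}.

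For the calibration remark: if $\pi^i_\theta$ equals the true conditional, then $r^i = p_1/p_2 \ge \tau$ forces $q^*_n = p_2/(p_1+p_2) \le 1/(1+\tau)$, up to the $\epsilon$ term. Taking $\kappa = \tau$ therefore makes $\varepsilon_{\mathrm{cal}} = 0$, and we recover \cref{prop:flip}. The final sentence of the corollary is only a non-implication. I would justify it by a one-line example: a miscalibrated model could assign $r^i \ge \tau$ for every $\tau$ on a fixed-probability set where $q^*_n = \tfrac12$. Then $\varepsilon_{\mathrm{cal}}$ stays constant in $\tau$ while the stopping term decays.

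The main obstacle is being careful about conditioning. There are three places where this matters:
\begin{itemize}
\item \cref{prop:flip} is stated conditional on the leading-pair event, so I will keep that conditioning throughout.
\item The commit step $n$ is a random (stopping) time. The martingale argument must be applied from that stopping time, which requires optional stopping / the strong Markov structure of the reveal filtration.
\item "Flip" must be defined relative to $\hat v,u$, which are themselves $\mathcal{F}_n$-measurable.
\end{itemize}
I would handle all three by working conditionally on $\mathcal{F}_n$ at the stopping time and only then averaging.
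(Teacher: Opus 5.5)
Your proposal is correct and follows essentially the same route as the paper. Both proofs split on the $\mathcal{F}_n$-measurable event $\{q^*_n \le \tfrac{1}{1+\kappa}\}$ and rerun the Ville-inequality argument of \cref{prop:flip} on that event using only the martingale property of the true posterior, so no calibration is needed there; both then bound the complement, intersected with $\{r^i \ge \tau\}$, by $\varepsilon_{\mathrm{cal}}(\tau,\kappa)$. Your extra care about the random commit time and your illustration of the non-implication go beyond what the paper writes without changing the argument; one small fix is that $r^i \le 1/\epsilon$, so $r^i \ge \tau$ cannot hold for literally every $\tau$, and you should instead take $r^i$ maximal on a fixed-probability set where $q^*_n = \tfrac12$.
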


\Cref{cor:floor} clarifies the role of the confidence threshold. The stopping error shrinks at rate $O(1/\tau)$, while the calibration term is not controlled by the same inverse-threshold bound and may therefore limit the improvement obtained by increasing $\tau$.

This is consistent with what the failure analysis in \cref{app:failure_modes} observes: the errors that remain at high thresholds are single tokens committed with high confidence at reasoning branch points (e.g., ``$\times\,2$'' where ``$/\,2$'' is correct), and every such error type also occurs under full decoding.

In short: under the assumptions of \cref{prop:flip}, leading-pair reversals are controlled at rate $O(1/\tau)$, while residual differences may reflect calibration error that is not controlled by the same bound.

\section{Experiments}
\label{sec:experiments}

\begin{table*}[t]
    \centering
    \small
    \setlength{\tabcolsep}{1pt}  
    \begin{sc}
    \begin{tabular}{ll cc cc cc cc}
        \toprule
        & & \multicolumn{2}{c}{\textbf{GSM8K}} & \multicolumn{2}{c}{\textbf{MMLU}} & \multicolumn{2}{c}{\textbf{HellaSwag}} & \multicolumn{2}{c}{\textbf{HumanEval}} \\
        \cmidrule(lr){3-4} \cmidrule(lr){5-6} \cmidrule(lr){7-8} \cmidrule(lr){9-10}
        \textbf{Model} & \textbf{Method} & Score$\uparrow$ & Speed$\uparrow$ & Score$\uparrow$ & Speed$\uparrow$ & Score$\uparrow$ & Speed$\uparrow$ & Pass@1$\uparrow$ & Speed$\uparrow$ \\
        \midrule
        \multirow{5}{*}{Dream-7B} & Full Decoding & \underline{81.1} & 1.00$\times$ & \underline{68.2} & 1.00$\times$ & \underline{73.3} & 1.00$\times$ & \underline{59.1} & 1.00$\times$ \\
        & Prophet & 68.6 & 2.79$\times$ & 60.2 & 1.40$\times$ & 72.4 & \underline{1.94}$\times$ & 56.7 & \underline{9.17}$\times$ \\
        & KLASS & \textbf{82.7} & 2.45$\times$ & 64.4 & 1.06$\times$ & \textbf{74.8} & 1.13$\times$ & \textbf{59.8} & 7.92$\times$ \\
        & Fast-dLLM (DualCache) & 78.4 & \underline{4.08}$\times$ & \textbf{72.2} & \underline{1.49}$\times$ & 62.5 & 1.50$\times$ & 53.7 & 7.32$\times$ \\
        \rowcolor[gray]{0.92}
        & \JOT (Ours) & 78.8 & \textbf{5.54}$\times$ & 66.7 & \textbf{1.57}$\times$ & 72.7 & \textbf{2.26}$\times$ & 58.5 & \textbf{19.60}$\times$ \\
        \midrule
        \multirow{5}{*}{LLaDA-8B} & Full Decoding & \underline{74.5} & 1.00$\times$ & \textbf{67.3} & 1.00$\times$ & \underline{76.7} & 1.00$\times$ & \textbf{47.6} & 1.00$\times$ \\
        & Prophet & 64.4 & 2.74$\times$ & 63.3 & \textbf{2.08}$\times$ & 75.8 & 1.84$\times$ & 40.9 & 1.89$\times$ \\
        & KLASS & 74.2 & 2.58$\times$ & 63.4 & 1.26$\times$ & \textbf{77.1} & 1.48$\times$ & 40.2 & \underline{2.53}$\times$ \\
        & Fast-dLLM (DualCache) & \textbf{75.1} & \underline{3.40}$\times$ & \underline{65.2} & \underline{2.02}$\times$ & 74.2 & \textbf{3.83}$\times$ & 39.0 & \textbf{3.05}$\times$ \\
        \rowcolor[gray]{0.92}
        & \JOT (Ours) & 73.4 & \textbf{3.75}$\times$ & 64.5 & 1.98$\times$ & 76.6 & \underline{3.42}$\times$ & \underline{44.5} & 2.12$\times$ \\
        \bottomrule
    \end{tabular}
    \end{sc}
    \caption{Comparison of training-free acceleration methods on Dream-7B-Instruct and LLaDA-8B-Instruct across four benchmarks. We report task score (\%) with gain relative to full decoding, and speedup ($\times$). \JOT uses $\tau_{\max}=90$, $\tau_{\min}=1$, $\gamma=0.5$, $D=8$ for Dream and $\tau_{\max}=30$, $\tau_{\min}=1$, $\gamma=0.5$, $D=8$ for LLaDA. Fast-dLLM decodes greedily per its published protocol---its probability threshold is calibrated for greedy decoding and collapses under temperature sampling---whereas our Dream experiments sample at $T=0.1$. Binomial standard errors: ${\pm}1.1$ points on GSM8K ($n{=}1319$), ${\pm}3.8$--$3.9$ points on HumanEval ($n{=}164$). \textbf{Bold} and \underline{underline} mark best and second-best per column.}
    \label{tab:main_results}
\end{table*}

We evaluate \JOT on a diverse set of benchmarks spanning reasoning, knowledge, and code generation. Our experiments aim to answer: (1) Does \JOT achieve better speed-quality trade-offs than existing early-stopping methods? (2) How do threshold selection and spatial modulation contribute to performance?

\subsection{Experimental Setup}

We evaluate on two state-of-the-art open diffusion language models: Dream-7B-Instruct \citep{ye2025dream7bdiffusionlarge} and LLaDA-8B-Instruct \citep{nie2025largelanguagediffusionmodels}. We consider four benchmarks: GSM8K \citep{cobbe2021gsm8k} for mathematical reasoning with chain-of-thought prompting, MMLU \citep{hendrycks2021measuringmassivemultitasklanguage} for multitask language understanding, HellaSwag \citep{zellers2019hellaswagmachinereallyfinish} for commonsense reasoning, and HumanEval \citep{chen2021codex} for code generation. All experiments use zero-shot prompting. Configuration details are provided in \cref{app:config}.

We compare against full decoding (standard diffusion sampling), Prophet \citep{li2025diffusionlanguagemodelsknow} (early-commit with top-2 confidence gaps), KLASS \citep{kim2025klassklguidedfastinference} (KL-adaptive stability sampling), and Fast-dLLM \citep{wu2025fastdllmtrainingfreeaccelerationdiffusion} (confidence-gated parallel unmasking with block-wise KV caching). All baselines use their recommended configurations: Prophet uses $\tau_{\text{high}}=7.5$, $\tau_{\text{mid}}=5.0$, $\tau_{\text{low}}=2.5$ for both models; KLASS uses $\tau_{\text{KL}}=0.001$, $\tau_{\text{conf}}=0.9$ for Dream and $\tau_{\text{KL}}=0.01$, $\tau_{\text{conf}}=0.7$ for LLaDA; Fast-dLLM uses its DualCache variant with confidence threshold $0.9$ and greedy decoding, following its published protocol. We report task-specific scores and \emph{speedup}, defined as the ratio of configured to actual steps. All experiments use LM-Evaluation-Harness \citep{eval-harness} for reproducibility. Our implementation is built upon the open-source framework provided by \citep{dllm}, which we extend to incorporate \JOT.

Ablation studies (\cref{sec:threshold_ablation,sec:spatial_ablation,sec:sweep}) are conducted on Dream-7B; a separate LLaDA threshold ablation is provided in \cref{app:llada_ablation}.

\subsection{Main Results}

\Cref{tab:main_results} presents our main comparison. On Dream-7B, \JOT achieves substantial speedups---$5.54\times$ on GSM8K and $19.60\times$ on HumanEval---while incurring only modest accuracy drops ($-2.3$ and $-0.6$ points respectively). Prophet shows larger quality degradation, particularly on GSM8K where it drops $12.5$ points, and on MMLU where it drops $8.0$ points. KLASS maintains strong accuracy on GSM8K ($+1.6$) but with lower speedup ($2.45\times$) and shows significant MMLU degradation ($-3.8$ points). Fast-dLLM is a strong baseline on both models: on Dream-7B, \JOT matches or exceeds it on both score and speedup for GSM8K, HellaSwag, and HumanEval, while Fast-dLLM scores higher on MMLU ($72.2$) but degrades sharply on HellaSwag ($62.5$ vs.\ $73.3$ for full decoding); on LLaDA-8B, Fast-dLLM attains the best GSM8K score ($75.1$, above full decoding) and the highest HellaSwag and HumanEval speedups, while \JOT scores higher on HellaSwag ($76.6$ vs.\ $74.2$) and HumanEval ($44.5$ vs.\ $39.0$). We note that, given the binomial standard error of ${\pm}3.8$--$3.9$ points at $n=164$, all HumanEval score differences between methods lie within one standard error. On LLaDA-8B (threshold selection detailed in \cref{app:llada_ablation}), \JOT maintains accuracy within $3.1$ points of full decoding on all benchmarks while providing competitive speedups. The per-token approach allows positions that genuinely require further refinement to continue through the diffusion chain, while finalizing confident predictions early. Wallclock time analysis (\cref{app:wallclock}) confirms that computational overhead is minor, generation quality metrics (\cref{app:quality}) show that fluency is preserved, a failure mode analysis (\cref{app:failure_modes}) characterizes the remaining error patterns, and confidence dynamics traces (\cref{app:dynamics}) illustrate how \JOT reallocates decoding effort away from ``obvious'' context tokens and toward the reasoning phase.

\Cref{fig:pareto} visualizes the aggregate speed-quality trade-off across all benchmarks on Dream-7B, using the geometric mean of step-speedups. \JOT occupies a favorable position on the Pareto frontier, achieving $4.43\times$ aggregate speedup while retaining $98.3\%$ of baseline quality. In contrast, Prophet sacrifices quality for speed ($91.9\%$ retention at $2.89\times$), KLASS preserves quality ($99.9\%$) but with more limited acceleration ($2.20\times$), and Fast-dLLM sits in between ($94.7\%$ retention at $2.86\times$). This positions \JOT as a compelling choice when both speed and accuracy matter.

\paragraph{Composability with KV-cache acceleration.}
\JOT decides \emph{when} tokens finalize, whereas Fast-dLLM's DualCache reduces the cost of \emph{each} step, so the two can be combined; we measure this combination directly. Running \JOT inside DualCache (with $\tau_{\max}$ retuned to $30$ for Dream and $60$ for LLaDA, since cached suffix logits are approximate), we obtain $76.5$ at $4.22\times$ on Dream GSM8K, $50.6$ at $8.47\times$ on Dream HumanEval, $75.7$ at $3.21\times$ on LLaDA GSM8K---the best LLaDA GSM8K score of any configuration we test, above full decoding's $74.5$---and $36.6$ at $3.02\times$ on LLaDA HumanEval. The retuned thresholds come from a small $\tau$ sweep under the cache (\cref{app:composability}). Composition thus preserves GSM8K accuracy, while on HumanEval Fast-dLLM's max-probability gate proves ${\sim}3$ points more robust to the approximate suffix logits than our confidence ratio---a gap that retuning narrows but does not close. Block-wise caching also caps the exit horizon---exits cannot cross block boundaries---reducing \JOT's standalone $19.60\times$ on Dream HumanEval to $8.47\times$ under the cache.

\begin{table*}[t]
    \centering
    \small
    \setlength{\tabcolsep}{4pt}
    \begin{sc}
    \begin{tabular}{c cc cc cc cc}
        \toprule
        & \multicolumn{2}{c}{\textbf{GSM8K}} & \multicolumn{2}{c}{\textbf{MMLU}} & \multicolumn{2}{c}{\textbf{HellaSwag}} & \multicolumn{2}{c}{\textbf{HumanEval}} \\
        \cmidrule(lr){2-3} \cmidrule(lr){4-5} \cmidrule(lr){6-7} \cmidrule(lr){8-9}
        $\tau$ & Score$\uparrow$ & Speed$\uparrow$ & Score$\uparrow$ & Speed$\uparrow$ & Score$\uparrow$ & Speed$\uparrow$ & Pass@1$\uparrow$ & Speed$\uparrow$ \\
        \midrule
        \textit{Full Decoding} & \textbf{81.1} & 1.00$\times$ & \textbf{68.2} & 1.00$\times$ & \textbf{73.3} & 1.00$\times$ & \textbf{59.1} & 1.00$\times$ \\
        \midrule
        10 & 61.9 & \textbf{11.07}$\times$ & 66.7 & \textbf{2.42}$\times$ & 72.6 & \textbf{2.40}$\times$ & 43.3 & \textbf{42.60}$\times$ \\
        30 & 74.6 & \underline{7.12}$\times$ & 66.7 & \underline{1.86}$\times$ & 72.6 & \underline{2.31}$\times$ & 54.3 & \underline{28.12}$\times$ \\
        60 & 76.1 & 5.90$\times$ & 66.6 & 1.60$\times$ & 72.6 & 2.27$\times$ & \underline{57.3} & 21.53$\times$ \\
        90 & 78.8 & 5.30$\times$ & 66.7 & 1.51$\times$ & \underline{72.8} & 2.25$\times$ & 56.7 & 19.08$\times$ \\
        120 & \underline{80.7} & 4.88$\times$ & 66.7 & 1.48$\times$ & 72.6 & 2.23$\times$ & \textbf{59.1} & 17.28$\times$ \\
        150 & \underline{80.7} & 4.67$\times$ & \underline{66.8} & 1.46$\times$ & 72.7 & 2.21$\times$ & 56.7 & 16.46$\times$ \\
        \bottomrule
    \end{tabular}
    \end{sc}
    \caption{Ablation on confidence threshold without spatial modulation. We report score (\%) with gain relative to full decoding, and speedup ($\times$). Results on Dream-7B. \textbf{Bold} and \underline{underline} mark best and second-best per column.}
    \label{tab:threshold_ablation}
\end{table*}

\begin{table*}[t]
    \centering
    \small
    \setlength{\tabcolsep}{4pt}
    \begin{sc}
    \begin{tabular}{ccc cc cc cc cc}
        \toprule
        & & & \multicolumn{2}{c}{\textbf{GSM8K}} & \multicolumn{2}{c}{\textbf{MMLU}} & \multicolumn{2}{c}{\textbf{HellaSwag}} & \multicolumn{2}{c}{\textbf{HumanEval}} \\
        \cmidrule(lr){4-5} \cmidrule(lr){6-7} \cmidrule(lr){8-9} \cmidrule(lr){10-11}
        $\tau_{\max}$ & $\gamma$ & $D$ & Score$\uparrow$ & Speed$\uparrow$ & Score$\uparrow$ & Speed$\uparrow$ & Score$\uparrow$ & Speed$\uparrow$ & Pass@1$\uparrow$ & Speed$\uparrow$ \\
        \midrule
        \multicolumn{3}{c}{\textit{Full Decoding}} & \textbf{81.1} & 1.00$\times$ & \textbf{68.2} & 1.00$\times$ & \textbf{73.3} & 1.00$\times$ & \textbf{59.1} & 1.00$\times$ \\
        \midrule
        60 & 0.3 & 8 & 75.5 & 6.11$\times$ & 66.7 & 1.66$\times$ & \underline{72.8} & \underline{2.27}$\times$ & 55.5 & \underline{22.39}$\times$ \\
        60 & 0.5 & 8 & 76.7 & \textbf{6.25}$\times$ & \underline{66.9} & \underline{1.68}$\times$ & 72.7 & \underline{2.27}$\times$ & 53.7 & 22.01$\times$ \\
        60 & 0.5 & 16 & 74.6 & \underline{6.15}$\times$ & \underline{66.9} & \underline{1.68}$\times$ & 72.7 & \underline{2.27}$\times$ & 54.9 & 22.38$\times$ \\
        60 & 0.7 & 8 & 74.9 & \textbf{6.25}$\times$ & 66.7 & \textbf{1.75}$\times$ & 72.6 & \textbf{2.28}$\times$ & 56.1 & \textbf{22.79}$\times$ \\
        \midrule
        90 & 0.3 & 8 & 78.4 & 5.52$\times$ & 66.7 & 1.54$\times$ & 72.7 & 2.26$\times$ & 56.1 & 19.80$\times$ \\
        90 & 0.5 & 8 & 78.8 & 5.54$\times$ & 66.7 & 1.57$\times$ & 72.7 & 2.26$\times$ & \underline{58.5} & 19.60$\times$ \\
        90 & 0.5 & 16 & 78.2 & 5.57$\times$ & 66.7 & 1.57$\times$ & 72.7 & 2.26$\times$ & 57.9 & 19.71$\times$ \\
        90 & 0.7 & 8 & 78.5 & 5.63$\times$ & 66.8 & 1.60$\times$ & 72.7 & 2.26$\times$ & 57.9 & 19.88$\times$ \\
        \midrule
        120 & 0.3 & 8 & 79.9 & 5.07$\times$ & 66.8 & 1.49$\times$ & 72.7 & 2.25$\times$ & 56.7 & 17.68$\times$ \\
        120 & 0.5 & 8 & 79.9 & 5.12$\times$ & 66.7 & 1.51$\times$ & 72.7 & 2.25$\times$ & 56.7 & 17.90$\times$ \\
        120 & 0.5 & 16 & 79.8 & 5.12$\times$ & 66.7 & 1.51$\times$ & 72.7 & 2.25$\times$ & 57.3 & 17.85$\times$ \\
        120 & 0.7 & 8 & \underline{80.0} & 5.18$\times$ & 66.7 & 1.54$\times$ & 72.7 & 2.25$\times$ & \underline{58.5} & 18.15$\times$ \\
        \bottomrule
    \end{tabular}
    \end{sc}
    \caption{Hyperparameter sweep on Dream-7B-Instruct. Score (\%) with gain relative to full decoding, and speedup ($\times$). \textbf{Bold} and \underline{underline} mark best and second-best per column.}
    \label{tab:sweep}
\end{table*}

\subsection{Threshold Ablation}
\label{sec:threshold_ablation}

We analyze the effect of the confidence threshold $\tau$ without spatial modulation ($\tau^i = \tau$ for all positions). This isolates the impact of threshold selection on the speed-quality trade-off.

\Cref{tab:threshold_ablation} reveals that threshold selection critically affects both quality and speedup. Lower thresholds ($\tau \leq 30$) yield aggressive speedups but cause substantial accuracy drops---GSM8K degrades by up to $19.2$ points at $\tau=10$ and HumanEval by $15.8$ points. The threshold $\tau = 90$ provides a favorable balance, achieving $5.30\times$ speedup on GSM8K with only $-2.3$ points accuracy drop and $19.08\times$ on HumanEval with $-2.4$ points. Higher thresholds ($\tau = 120, 150$) approach baseline quality more closely (GSM8K within $0.4$ points) though with diminishing speedup returns. Based on these results, we select $\tau_{\max} = 90$ as the baseline threshold for spatial modulation experiments.

\subsection{Spatial Modulation Ablation}
\label{sec:spatial_ablation}

Having identified $\tau_{\max} = 90$ as an effective threshold, we sweep the decay rate $\gamma \in \{0.3, 0.5, 0.7, 0.9\}$ and window size $D \in \{4, 8, 16\}$ with $\tau_{\min} = 1$ fixed (full results in \cref{app:spatial_ablation}; an ablation on $\tau_{\min}$ itself is in \cref{app:tau_min}). Spatial modulation buys speedup on long-generation benchmarks: HumanEval improves from $19.08\times$ to $21.27\times$ at $(\gamma, D) = (0.9, 16)$, while larger decay rates spread influence too broadly and cost accuracy on GSM8K ($-1.5$ points at $\gamma = 0.9$, $D = 8$). We select $(\gamma, D) = (0.5, 8)$ as the default, which matches the no-spatial GSM8K accuracy ($78.8\%$) while improving speedup from $5.30\times$ to $5.54\times$.

\subsection{Hyperparameter Sweep}
\label{sec:sweep}

Finally, we conduct a sweep around the identified optimal parameters: $\tau_{\max} \in \{60, 90, 120\}$ and $(\gamma, D) \in \{(0.3, 8), (0.5, 8), (0.5, 16), (0.7, 8)\}$, with $\tau_{\min} = 1$ fixed.

\Cref{tab:sweep} confirms that \JOT maintains robust performance across a range of hyperparameter configurations. Lower thresholds ($\tau_{\max} = 60$) trade more accuracy for speed, while higher thresholds ($\tau_{\max} = 120$) approach baseline quality with more modest speedups. Inspecting mean confidence across diffusion steps (\cref{app:dynamics}) shows why this ordering holds: conservative settings ($\tau_{\max} \geq 30$) let the model finalize obvious context tokens immediately while preserving a reasoning phase whose confidence trajectory mirrors full decoding, whereas aggressive settings truncate that phase---the mechanism behind the degradation in \cref{tab:threshold_ablation}. The configuration $(\tau_{\max}, \gamma, D) = (90, 0.5, 8)$ provides a balanced default, but practitioners can adjust based on their accuracy-speed priorities. This stability reduces the need for task-specific tuning, making the method practical for deployment across diverse applications.
\begin{figure}[tb]
    \centering
    \includegraphics[width=0.7\columnwidth]{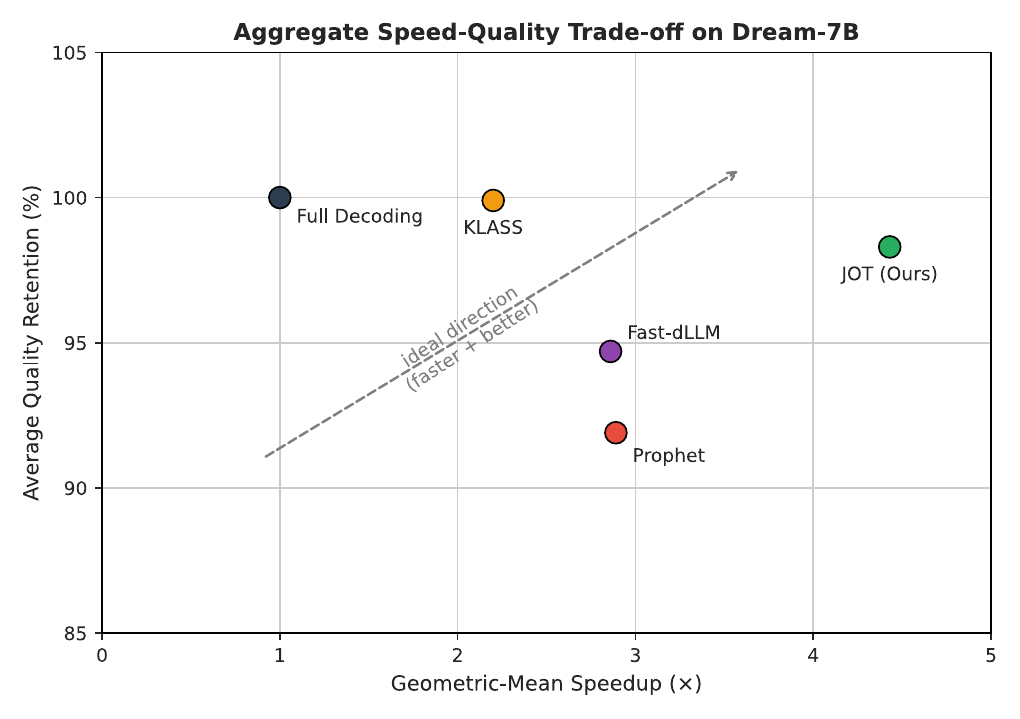}
    \caption{Aggregate speed-quality trade-off on Dream-7B. Speedup is the geometric mean of step-speedups across the four benchmarks; quality retention is the mean score ratio relative to full decoding. \JOT attains the highest aggregate speedup ($4.43\times$) while retaining $98.3\%$ of baseline quality.}
    \label{fig:pareto}
\end{figure}

\section{Limitations}
\label{sec:limitations}

\Cref{sec:theory} provides a formal position-wise stability
analysis for confidence-guided commitments. Under the assumption that
the model prediction matches the target conditional, it shows that
the probability of a leading-pair reversal decreases as
$O(1/\tau)$, while the subsequent decomposition separates this
threshold-controlled term from residual calibration error. Extending
the analysis to direct guarantees on sequence-level quality and
interactions among multiple tokens committed in the same step remains
an open direction.

Although the hyperparameter sweep shows reasonable robustness,
optimal thresholds may still vary across tasks and domains. Our
experiments focus on instruction-following benchmarks. Other domains
like creative writing or translation may require different
configurations. We also note that the thresholds in this work were
selected on the same evaluation sets used for reporting, without a
held-out calibration split; in deployment we recommend calibrating
$\tau$ on a small held-out subset.

Our experiments use generation lengths of at most 512 tokens
(\cref{app:config}). Behavior on longer sequences (e.g., 1024 or 2048
tokens) is unexplored. Spatial modulation effects may compound
differently at scale, and the cumulative impact of many early-exit
decisions on coherence is unknown.
\section{Conclusion}
\label{sec:conclusion}

We presented \JOT, a training-free method for per-token early stopping in diffusion language models. By monitoring prediction confidence at each position independently and applying spatially-adaptive thresholds, \JOT allows different tokens to exit at different steps, concentrating computation on positions that genuinely require further refinement. Experiments on Dream-7B and LLaDA-8B across diverse benchmarks demonstrate that \JOT achieves substantial speedups while maintaining competitive accuracy, providing the best speed--quality trade-off among the tested early-stopping methods on Dream-7B and remaining competitive on LLaDA-8B.

\bibliography{main}

\appendix
\section{Experimental Configuration}
\label{app:config}

\begin{table}[t]
    \centering
    \small
    \begin{sc}
    \begin{tabular}{llcc}
        \toprule
        \textbf{Model} & \textbf{Benchmark} & \textbf{Steps} & \textbf{Block} \\
        \midrule
        \multirow{4}{*}{Dream-7B} & GSM8K & 256 & --- \\
        & MMLU & 3 & --- \\
        & HellaSwag & 5 & --- \\
        & HumanEval & 512 & --- \\
        \midrule
        \multirow{4}{*}{LLaDA-8B} & GSM8K & 256 & 32 \\
        & MMLU & 3 & 3 \\
        & HellaSwag & 5 & 5 \\
        & HumanEval & 512 & 32 \\
        \bottomrule
    \end{tabular}
    \end{sc}
    \caption{Evaluation configuration for each model and benchmark.}
    \label{tab:config}
\end{table}

\textbf{Steps} denotes the number of diffusion (denoising) steps used during generation; \textbf{Block} is the block size for LLaDA's semi-autoregressive decoding, which generates tokens in fixed-size blocks (Dream uses fully parallel decoding, so no block size applies). These are standard configurations commonly adopted in the literature.

\section{Additional Experiments}
\label{app:additional}

\subsection{Wallclock Time Analysis}
\label{app:wallclock}

The step-based speedup reported in the main text measures the reduction in diffusion steps, which directly correlates with the number of forward passes through the model. However, practical deployment also involves per-step overhead from confidence computation and early-exit bookkeeping. \Cref{tab:wallclock} reports wallclock speedups measured on a single NVIDIA A100 GPU.

\begin{table*}[t]
    \centering
    \small
    \setlength{\tabcolsep}{4pt}
    \begin{sc}
    \begin{tabular}{ll cc cc cc cc}
        \toprule
        & & \multicolumn{2}{c}{\textbf{GSM8K}} & \multicolumn{2}{c}{\textbf{MMLU}} & \multicolumn{2}{c}{\textbf{HellaSwag}} & \multicolumn{2}{c}{\textbf{HumanEval}} \\
        \cmidrule(lr){3-4} \cmidrule(lr){5-6} \cmidrule(lr){7-8} \cmidrule(lr){9-10}
        \textbf{Model} & \textbf{Method} & Score$\uparrow$ & Speed$\uparrow$ & Score$\uparrow$ & Speed$\uparrow$ & Score$\uparrow$ & Speed$\uparrow$ & Pass@1$\uparrow$ & Speed$\uparrow$ \\
        \midrule
        \multirow{4}{*}{Dream-7B} & Full Decoding & \underline{81.1} & 1.00$\times$ & \textbf{68.2} & 1.00$\times$ & \underline{73.3} & 1.00$\times$ & \underline{59.1} & 1.00$\times$ \\
        & Prophet & 68.6 & \underline{2.41}$\times$ & 60.2 & \underline{1.38}$\times$ & 72.4 & \underline{1.91}$\times$ & 56.7 & \underline{6.82}$\times$ \\
        & KLASS & \textbf{82.7} & 2.12$\times$ & 64.4 & 1.08$\times$ & \textbf{74.8} & 1.05$\times$ & \textbf{59.8} & 5.94$\times$ \\
        \rowcolor[gray]{0.92}
        & \JOT (Ours) & 78.8 & \textbf{4.71}$\times$ & \underline{66.7} & \textbf{1.55}$\times$ & 72.7 & \textbf{2.22}$\times$ & 58.5 & \textbf{15.15}$\times$ \\
        \midrule
        \multirow{4}{*}{LLaDA-8B} & Full Decoding & \textbf{74.5} & 1.00$\times$ & \textbf{67.3} & 1.00$\times$ & \underline{76.7} & 1.00$\times$ & \textbf{47.6} & 1.00$\times$ \\
        & Prophet & 64.4 & \underline{2.38}$\times$ & 63.3 & \textbf{2.04}$\times$ & 75.8 & \underline{1.81}$\times$ & 40.9 & 1.72$\times$ \\
        & KLASS & \underline{74.2} & 1.92$\times$ & 63.4 & 1.24$\times$ & \textbf{77.1} & 1.41$\times$ & 40.2 & \textbf{2.18}$\times$ \\
        \rowcolor[gray]{0.92}
        & \JOT (Ours) & 73.4 & \textbf{3.15}$\times$ & \underline{64.5} & \underline{1.85}$\times$ & 76.6 & \textbf{3.08}$\times$ & \underline{44.5} & \underline{1.77}$\times$ \\
        \bottomrule
    \end{tabular}
    \end{sc}
    \caption{Wallclock time comparison on Dream-7B-Instruct and LLaDA-8B-Instruct. We report task score (\%) with gain relative to full decoding, and wallclock speedup ($\times$). \textbf{Bold} and \underline{underline} mark best and second-best per column.}
    \label{tab:wallclock}
\end{table*}

For long-generation tasks (GSM8K, HumanEval), the overhead is amortized over many steps, and wallclock speedups remain substantial: \JOT achieves $4.71\times$ on GSM8K and $15.15\times$ on HumanEval for Dream-7B. For short-generation tasks (MMLU, HellaSwag), the overhead represents a larger fraction of total time, resulting in wallclock speedups closer to step-based speedups. Overall, the computational overhead of \JOT is minor relative to the savings from reduced forward passes.

\subsection{Effect of Minimum Threshold}
\label{app:tau_min}

The spatial modulation mechanism interpolates thresholds between $\tau_{\max}$ and $\tau_{\min}$ based on proximity to unmasked tokens. We investigate whether the choice of $\tau_{\min}$ significantly affects performance by comparing $\tau_{\min} = 1$ (used in main experiments) with $\tau_{\min} = 30$.

\begin{table*}[t]
    \centering
    \small
    \begin{sc}
    \begin{tabular}{cc cccc cccc}
        \toprule
        & & \multicolumn{4}{c}{\textbf{GSM8K}} & \multicolumn{4}{c}{\textbf{HumanEval}} \\
        \cmidrule(lr){3-6} \cmidrule(lr){7-10}
        & & \multicolumn{2}{c}{$\tau_{\min}=1$} & \multicolumn{2}{c}{$\tau_{\min}=30$} & \multicolumn{2}{c}{$\tau_{\min}=1$} & \multicolumn{2}{c}{$\tau_{\min}=30$} \\
        \cmidrule(lr){3-4} \cmidrule(lr){5-6} \cmidrule(lr){7-8} \cmidrule(lr){9-10}
        $\gamma$ & $D$ & Score & Speed & Score & Speed & Pass@1 & Speed & Pass@1 & Speed \\
        \midrule
        0.9 & 16 & 78.5 & \textbf{5.82}$\times$ & \textbf{78.0} & \textbf{5.51}$\times$ & 57.3 & \textbf{21.94}$\times$ & \underline{58.0} & 19.63$\times$ \\
        0.9 & 8 & 77.9 & \underline{5.71}$\times$ & \textbf{78.0} & \underline{5.38}$\times$ & \underline{58.7} & \underline{20.82}$\times$ & 57.3 & \underline{19.78}$\times$ \\
        0.7 & 8 & \underline{79.7} & 5.58$\times$ & 77.3 & 5.34$\times$ & 57.3 & 20.22$\times$ & \textbf{58.7} & 19.71$\times$ \\
        0.5 & 16 & \underline{79.7} & 5.49$\times$ & 77.3 & 5.31$\times$ & \underline{58.7} & 20.03$\times$ & 57.3 & 19.73$\times$ \\
        0.5 & 8 & \textbf{80.0} & 5.49$\times$ & 77.3 & 5.30$\times$ & \underline{58.7} & 19.97$\times$ & \textbf{58.7} & 19.70$\times$ \\
        0.5 & 4 & 79.4 & 5.45$\times$ & \textbf{78.0} & 5.28$\times$ & 58.0 & 19.97$\times$ & \underline{58.0} & \textbf{19.85}$\times$ \\
        0.3 & 8 & \underline{79.7} & 5.45$\times$ & 77.3 & 5.27$\times$ & \textbf{59.3} & 19.90$\times$ & \textbf{58.7} & 19.61$\times$ \\
        \bottomrule
    \end{tabular}
    \end{sc}
    \caption{Comparison of $\tau_{\min} = 1$ vs.\ $\tau_{\min} = 30$ with $\tau_{\max} = 90$. Higher $\tau_{\min}$ provides no meaningful benefit. Results on Dream-7B (random subset of each benchmark). \textbf{Bold} and \underline{underline} mark best and second-best per column.}
    \label{tab:tau_min_ablation}
\end{table*}

\Cref{tab:tau_min_ablation} shows that increasing $\tau_{\min}$ from 1 to 30 provides no meaningful benefit. On GSM8K, accuracy remains within 1-2 points across configurations, while speedups are slightly lower with $\tau_{\min} = 30$ due to the reduced range of threshold modulation. On HumanEval, both settings achieve similar accuracy (57--59\%) with comparable speedups. These results suggest that the primary driver of performance is $\tau_{\max}$, and positions benefiting from spatial modulation (those near context boundaries) can safely use very low thresholds without degrading quality. We therefore use $\tau_{\min} = 1$ as the default setting.

\subsection{Generation Quality Metrics}
\label{app:quality}

To assess whether early stopping affects generation fluency beyond task accuracy, we evaluate open-ended text continuation on 128 samples from the C4 validation set \citep{raffel2023exploringlimitstransferlearning}. Using the first 128 tokens as a prefix, we force the generation of 256 continuation tokens using Dream-v0-Instruct-7B. Table \ref{tab:quality} reports MAUVE \citep{pillutla2021mauvemeasuringgapneural} for distributional similarity against the human reference, 4-gram repetition rate (Rep-4), and bigram diversity (Dist-2).

\begin{table}[t]
    \centering
    \small
    \begin{tabular}{lccc}
        \toprule
        \textsc{Method} & \textsc{Mauve}$\uparrow$ & \textsc{Rep-4}$\downarrow$ & \textsc{Dist-2}$\uparrow$ \\
        \midrule
        Human Reference & 1.0000 & 0.0080 & 0.7300 \\
        \midrule
        Full Decoding & 0.8975 & 0.0303 & 0.7479 \\
        \JOT (argmax) & \textbf{0.9251} & \textbf{0.0286} & \textbf{0.7486} \\
        \bottomrule
    \end{tabular}
    \caption{Generation quality metrics on C4 continuations (256 tokens). Lower is better for Rep-4 and Steps; higher is better for MAUVE and Dist-2. \textbf{Bold} marks the best model performance.}
    \label{tab:quality}
\end{table}

The baseline decodes with nucleus sampling ($p=0.95$, $T=1.0$) with EOS logits masked to force full-length outputs, while \JOT is configured with argmax filling and a maximum threshold of $90$. The two runs therefore differ in decoding policy in addition to early exit, so these numbers should not be read as a controlled superiority claim. \JOT attains a higher MAUVE score (0.9251 vs.\ 0.8975), slightly lower repetition (0.0286 vs.\ 0.0303), and marginally higher diversity (0.7486 vs.\ 0.7479), while reducing the average number of generation steps. We conclude that early exiting introduces no measurable degradation in open-ended generation.

\subsection{LLaDA Threshold Ablation}
\label{app:llada_ablation}

\begin{table*}[t]
    \centering
    \small
    \begin{sc}
    \setlength{\tabcolsep}{4pt}
    \begin{tabular}{l cc cc cc cc}
        \toprule
        & \multicolumn{2}{c}{\textbf{GSM8K}} & \multicolumn{2}{c}{\textbf{MMLU}} & \multicolumn{2}{c}{\textbf{HellaSwag}} & \multicolumn{2}{c}{\textbf{HumanEval}} \\
        \cmidrule(lr){2-3} \cmidrule(lr){4-5} \cmidrule(lr){6-7} \cmidrule(lr){8-9}
        $\tau_{\max}$ & Score$\uparrow$ & Speed$\uparrow$ & Score$\uparrow$ & Speed$\uparrow$ & Score$\uparrow$ & Speed$\uparrow$ & Pass@1$\uparrow$ & Speed$\uparrow$ \\
        \midrule
        Baseline & 74.5 & 1.00$\times$ & \textbf{67.3} & 1.00$\times$ & \textbf{76.7} & 1.00$\times$ & \textbf{47.6} & 1.00$\times$ \\
        \midrule
        10 & 32.2 & \textbf{8.97}$\times$ & 63.8 & \textbf{2.21}$\times$ & 72.7 & \textbf{4.16}$\times$ & 41.4 & \textbf{3.50}$\times$ \\
        30 & 72.1 & \underline{3.54}$\times$ & 63.8 & \underline{1.98}$\times$ & 72.7 & \underline{3.27}$\times$ & 45.1 & \underline{2.12}$\times$ \\
        60 & 75.8 & 2.69$\times$ & 63.8 & 1.85$\times$ & 72.7 & 2.03$\times$ & \underline{45.5} & 1.77$\times$ \\
        90 & \textbf{79.8} & 2.45$\times$ & 63.8 & 1.77$\times$ & 72.7 & 1.64$\times$ & 44.5 & 1.62$\times$ \\
        120 & \underline{79.1} & 2.36$\times$ & 63.8 & 1.71$\times$ & 72.7 & 1.33$\times$ & 44.1 & 1.51$\times$ \\
        150 & 78.2 & 2.24$\times$ & 63.8 & 1.65$\times$ & 72.7 & 1.21$\times$ & 43.9 & 1.44$\times$ \\
        \bottomrule
    \end{tabular}
    \end{sc}
    \caption{Threshold ablation on LLaDA-8B-Instruct (random subset of each benchmark). Spatial modulation uses $\gamma=0.5$, $D=8$. Score gain is relative to full decoding baseline. \textbf{Bold} and \underline{underline} mark best and second-best per column.}
    \label{tab:llada_ablation}
\end{table*}

To verify generalization across model architectures, we conduct a threshold ablation on LLaDA-8B (\cref{tab:llada_ablation}). LLaDA exhibits different sensitivity patterns than Dream: GSM8K benefits from higher thresholds ($\tau=60$--$90$ achieve accuracy above baseline), while MMLU and HellaSwag show consistent scores across all thresholds with varying speedups. Based on these results, we select $\tau_{\max}=30$ for LLaDA in the main comparison, as it provides strong speedups ($3.54\times$ on GSM8K, $1.98\times$ on MMLU) while maintaining close to baseline quality on reasoning tasks.

\subsection{Confidence Dynamics}
\label{app:dynamics}

\begin{figure}[t]
    \centering
    \includegraphics[width=0.8\columnwidth]{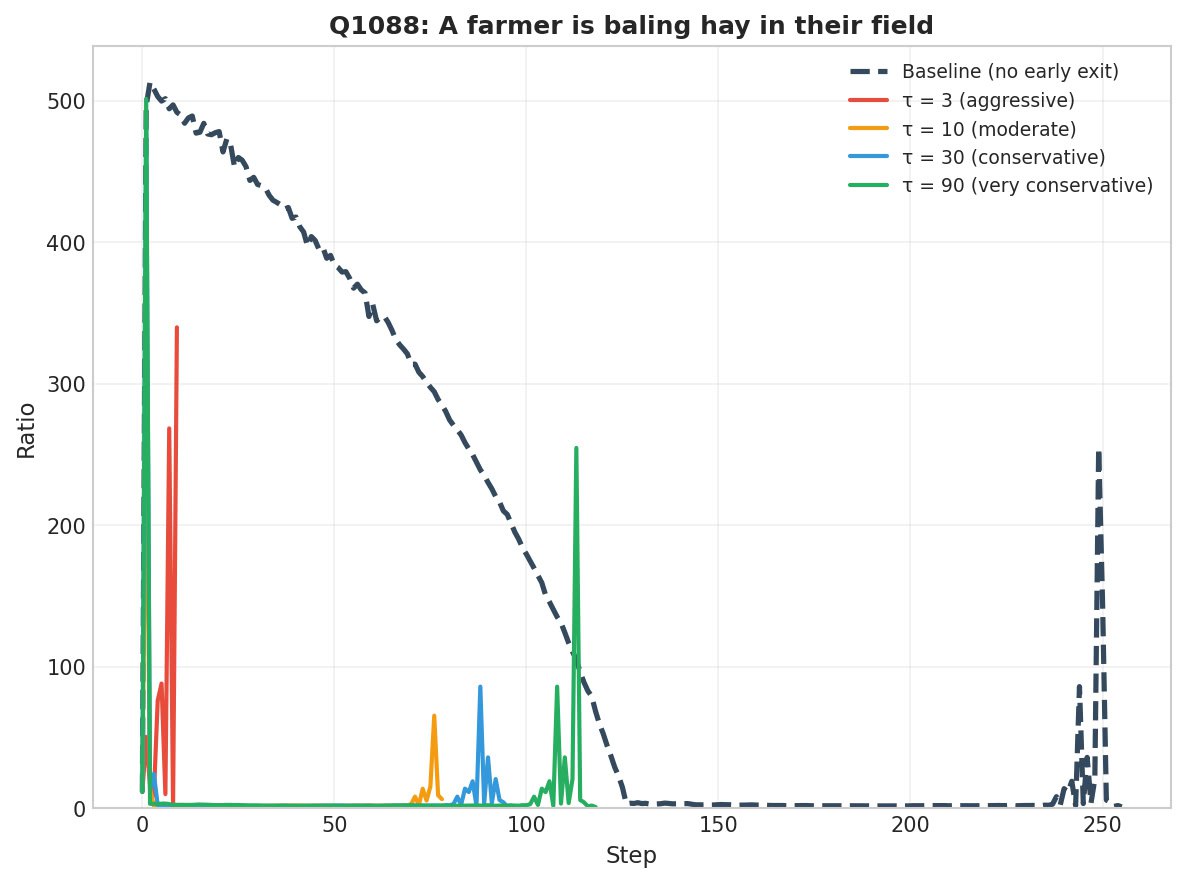}
    \caption{Confidence ratio dynamics for a GSM8K sample. The baseline (dashed) shows a smooth decline as the sampler gradually unmasks tokens. \JOT with $\tau = 90$ rapidly finalizes high-confidence tokens, then exhibits spikes during the reasoning phase before converging to a similar pattern as the baseline.}
    \label{fig:dynamics_1}
\end{figure}

\begin{figure}[t]
    \centering
    \includegraphics[width=0.8\columnwidth]{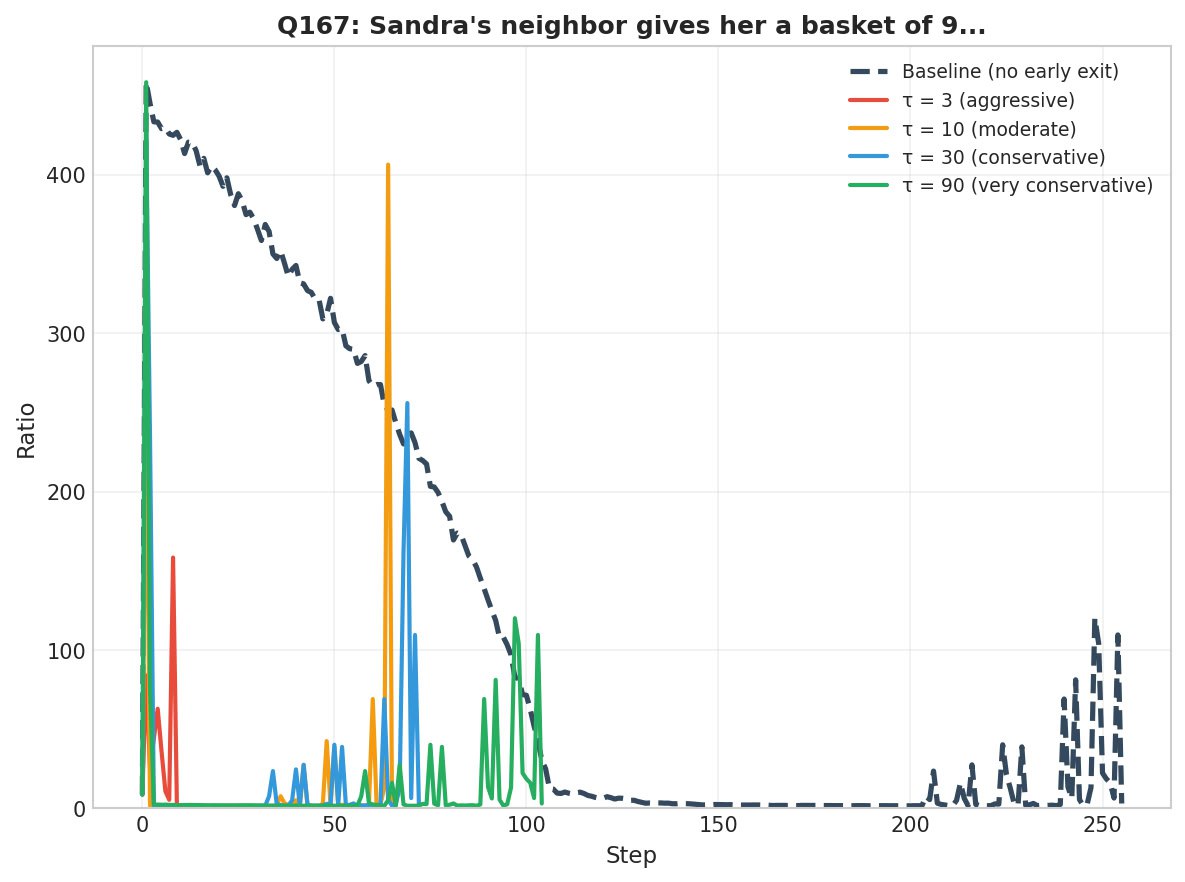}
    \caption{Confidence ratio dynamics for a second GSM8K sample. Conservative thresholds ($\tau \geq 30$) allow the model to reach the reasoning phase and produce answer patterns similar to the baseline, while aggressive thresholds ($\tau = 3$) exit prematurely.}
    \label{fig:dynamics_2}
\end{figure}

To understand how \JOT affects the generation process, we visualize the mean confidence ratio across diffusion steps for two GSM8K samples (\cref{fig:dynamics_1,fig:dynamics_2}). The baseline decoding exhibits a characteristic pattern: confidence starts very high and declines smoothly over steps. This occurs because the standard transfer schedule forces the model to unmask tokens gradually, even when many positions are already highly confident. The model effectively spends early steps filling in ``obvious'' context tokens that it could predict immediately, before entering a lower-confidence reasoning phase where it works on the actual answer.

\JOT fundamentally changes this dynamic. By allowing early exit for confident predictions, the method permits the model to finalize obvious tokens instantly and proceed directly to the reasoning phase. This manifests as a rapid initial drop in mean confidence (as high-confidence positions exit), followed by sustained activity during reasoning. Crucially, with conservative thresholds ($\tau = 90$), the confidence pattern at the end of the reasoning phase closely mirrors that of the baseline, indicating that the model follows a similar computational trajectory to arrive at the answer. Aggressive thresholds ($\tau = 3$) exit too early, cutting off the reasoning process before the model can fully work through the problem. This explains the accuracy degradation reported in the threshold ablation of the main paper. More conservative thresholds ($\tau = 30, 60$) provide sufficient time for reasoning while still achieving substantial speedups by eliminating redundant refinement of already-converged positions.

\Cref{fig:dynamics_2} shows that the same pattern holds on a second sample, indicating the behavior is not specific to a single problem instance.

\subsection{Failure Mode Analysis}
\label{app:failure_modes}

To characterize when \JOT introduces errors, we compare \JOT against full decoding on Dream-7B over $n=500$ GSM8K samples, at a conservative ($\tau_{\max}=90$) and an aggressive ($\tau_{\max}=30$) setting ($\gamma=0.5$, $D=8$ in both cases).

\paragraph{Overall effect.} Full decoding scores $83.2\%$. \JOT at $\tau_{\max}=90$ scores $81.0\%$ ($-2.2$ points) at $5.69\times$ speedup; a McNemar exact test on the paired outcomes does not reject equality ($p \approx 0.21$). At $\tau_{\max}=30$, accuracy drops to $71.4\%$ ($-11.8$ points, $p < 10^{-9}$) at $7.65\times$. Final-answer agreement with the baseline is $87.4\%$ at $\tau_{\max}=90$ and $82.2\%$ at $\tau_{\max}=30$. The disagreements are not one-sided: \JOT \emph{repairs} baseline errors in $5.2\%$ and $3.0\%$ of samples, respectively.

\paragraph{Identified failure modes.} We identify two primary failure patterns in multi-step reasoning.

\emph{Single-token catastrophic errors} occur when \JOT early-commits a token at a critical branching point---such as an arithmetic operator or intermediate value---producing a plausible but incorrect reasoning chain. For example, given ``\textit{a recipe makes 32\,oz of sauce using half as many oz of tomatoes},'' baseline correctly generates $32 / 2 = 16$\,oz, while \JOT commits ``$\times\,2$'' instead of ``$/\,2$,'' yielding $32 \times 2 = 64$\,oz. The rest of the chain is internally consistent but produces a wrong final answer.

\emph{Premature answer anchoring} occurs when the model exits on an intermediate variable rather than the final asked quantity. For instance, given ``\textit{110 coins total, 30 more gold than silver---how many gold?},'' both models solve $x + (x{+}30) = 110 \Rightarrow x = 40$. Baseline continues to compute gold $= 40 + 30 = 70$, while \JOT emits the intermediate $x = 40$ as the final answer. Importantly, \JOT does not introduce entirely novel failure modes---every observed error type also occurs under full decoding, albeit at different rates.

\paragraph{Speedup--error relationship.} Stratifying the $500$ samples by achieved speedup shows that errors concentrate at \emph{low} achieved speedups. The \emph{total} error rate by achieved-speedup quintile (Q1 slowest to Q5 fastest) is $30/29/19/10/7\%$ at $\tau_{\max}=90$ and $49/34/33/16/11\%$ at $\tau_{\max}=30$; the extreme quintiles have non-overlapping $95\%$ Wilson intervals. Achieved speedup thus acts as a difficulty proxy: samples on which many tokens exit early are those the model finds easy, so fast exits are safe exits, whereas slow-exit samples are genuinely hard problems on which full decoding also errs.

\subsection{Composability with Fast-dLLM's DualCache}
\label{app:composability}

\begin{table}[t]
    \centering
    \small
    \setlength{\tabcolsep}{3pt}
    \begin{sc}
    \begin{tabular}{ll cc cc}
        \toprule
        & & \multicolumn{2}{c}{\textbf{GSM8K}} & \multicolumn{2}{c}{\textbf{HumanEval}} \\
        \cmidrule(lr){3-4} \cmidrule(lr){5-6}
        \textbf{Model} & \textbf{Method} & Score$\uparrow$ & Speed$\uparrow$ & Pass@1$\uparrow$ & Speed$\uparrow$ \\
        \midrule
        \multirow{4}{*}{Dream-7B} & Fast-dLLM & \textbf{78.4} & 4.08$\times$ & \textbf{53.7} & 7.32$\times$ \\
        & + \JOT ($\tau{=}30$) & 76.5 & \textbf{4.22}$\times$ & 50.6 & \textbf{8.47}$\times$ \\
        & + \JOT ($\tau{=}60$) & --- & --- & 48.2 & 7.72$\times$ \\
        & + \JOT ($\tau{=}90$) & --- & --- & 49.4 & 7.44$\times$ \\
        \midrule
        \multirow{4}{*}{LLaDA-8B} & Fast-dLLM & 75.1 & \textbf{3.40}$\times$ & \textbf{39.0} & 3.05$\times$ \\
        & + \JOT ($\tau{=}10$) & --- & --- & 34.8 & \textbf{5.15}$\times$ \\
        & + \JOT ($\tau{=}30$) & --- & --- & 35.4 & 3.61$\times$ \\
        & + \JOT ($\tau{=}60$) & \textbf{75.7} & 3.21$\times$ & 36.6 & 3.02$\times$ \\
        \bottomrule
    \end{tabular}
    \end{sc}
    \caption{\JOT composed with Fast-dLLM's DualCache. All rows decode under the same block-wise KV cache; ``+ \JOT'' rows run \JOT inside the cached decoder at the given $\tau_{\max}$. The $\tau$ sweep is conducted on HumanEval; the selected thresholds ($\tau{=}30$ for Dream, $\tau{=}60$ for LLaDA) are then also evaluated on GSM8K. Score / step-speedup; full-decoding references appear in \cref{tab:main_results}. \textbf{Bold} marks best per column and model.}
    \label{tab:composability}
\end{table}

\Cref{tab:composability} reports the full data behind the composability results in the main paper: a threshold sweep for \JOT operating inside Fast-dLLM's DualCache on HumanEval, and the selected configurations additionally evaluated on GSM8K. Under the cache, the sweep is flat on Dream ($\tau=30$ best in-sweep) and favors higher thresholds on LLaDA. Retuning narrows but does not close the HumanEval gap to Fast-dLLM's max-probability gate---consistent with approximate suffix logits penalizing the confidence-ratio criterion more than a max-probability one. On GSM8K, the composed system is at parity on Dream and best-scoring on LLaDA ($75.7$ vs.\ full decoding $74.5$).

\subsection{Spatial Modulation Ablation}
\label{app:spatial_ablation}

\begin{table*}[t]
    \centering
    \small
    \begin{sc}
    \setlength{\tabcolsep}{2pt}  
    \begin{tabular}{cc cc cc cc cc}
        \toprule
        & & \multicolumn{2}{c}{\textbf{GSM8K}} & \multicolumn{2}{c}{\textbf{MMLU}} & \multicolumn{2}{c}{\textbf{HellaSwag}} & \multicolumn{2}{c}{\textbf{HumanEval}} \\
        \cmidrule(lr){3-4} \cmidrule(lr){5-6} \cmidrule(lr){7-8} \cmidrule(lr){9-10}
        $\gamma$ & $D$ & Score$\uparrow$ & Speed$\uparrow$ & Score$\uparrow$ & Speed$\uparrow$ & Score$\uparrow$ & Speed$\uparrow$ & Pass@1$\uparrow$ & Speed$\uparrow$ \\
        \midrule
        \multicolumn{2}{c}{\textit{No spatial}} & \textbf{78.8} & 5.30$\times$ & 66.7 & 1.51$\times$ & \textbf{72.8} & 2.25$\times$ & 56.7 & 19.08$\times$ \\
        \midrule
        0.9 & 16 & 77.6 & \textbf{5.87}$\times$ & 66.7 & \textbf{1.66}$\times$ & 72.7 & \textbf{2.27}$\times$ & \textbf{59.1} & \textbf{21.27}$\times$ \\
        0.9 & 8 & 77.3 & \underline{5.75}$\times$ & 66.7 & \underline{1.64}$\times$ & 72.7 & \textbf{2.27}$\times$ & \underline{58.5} & \underline{20.24}$\times$ \\
        0.7 & 8 & \underline{78.5} & 5.63$\times$ & \textbf{66.8} & 1.60$\times$ & 72.7 & 2.26$\times$ & 57.9 & 19.88$\times$ \\
        0.5 & 16 & 78.2 & 5.57$\times$ & 66.7 & 1.57$\times$ & 72.7 & 2.26$\times$ & 57.9 & 19.71$\times$ \\
        0.5 & 8 & \textbf{78.8} & 5.54$\times$ & 66.7 & 1.57$\times$ & 72.7 & 2.26$\times$ & \underline{58.5} & 19.60$\times$ \\
        0.5 & 4 & \underline{78.5} & 5.52$\times$ & 66.7 & 1.57$\times$ & 72.7 & 2.26$\times$ & 57.3 & 19.76$\times$ \\
        0.3 & 8 & 78.4 & 5.52$\times$ & 66.7 & 1.54$\times$ & 72.5 & 2.26$\times$ & 56.1 & 19.80$\times$ \\
        \bottomrule
    \end{tabular}
    \end{sc}
    \caption{Ablation on spatial modulation with $(\tau_{\max}, \tau_{\min}) = (90, 1)$. Gains are relative to threshold-only baseline ($\tau = 90$, no spatial). Results on Dream-7B. \textbf{Bold} and \underline{underline} mark best and second-best per column.}
    \label{tab:spatial_ablation}
\end{table*}

\Cref{tab:spatial_ablation} reports the full spatial-modulation sweep on Dream-7B with $(\tau_{\max}, \tau_{\min}) = (90, 1)$, varying the decay rate $\gamma$ and window radius $D$; gains are relative to the threshold-only baseline ($\tau = 90$, no spatial modulation). For long-generation benchmarks, spatial modulation provides meaningful speedup gains: HumanEval improves from $19.08\times$ to $21.27\times$ with $(\gamma, D) = (0.9, 16)$. The configuration $(\gamma, D) = (0.5, 8)$ achieves a favorable balance, matching the no-spatial accuracy on GSM8K ($78.8\%$) while improving speedup from $5.30\times$ to $5.54\times$; HumanEval moves by $+1.8$ points ($58.5\%$), a difference within one standard error at $n = 164$. Larger decay rates ($\gamma = 0.9$) spread influence too broadly, causing accuracy degradation on GSM8K ($-1.5$ points at $\gamma = 0.9$, $D = 8$).

For MMLU (3 tokens) and HellaSwag (5 tokens), generation length is shorter than the window sizes $D$ we consider, so the spatial window effectively spans the entire sequence regardless of $D$ and results for $D = 8$ and $D = 16$ coincide with $D = 4$. This is why both benchmarks are essentially flat across the sweep. We therefore select $(\gamma, D) = (0.5, 8)$ as the default spatial configuration used in the main paper.

\section{Proofs}
\label{app:proofs}
 
\begin{lemma}[Order does not matter]
\label{lem:order}
Suppose a decoder repeatedly (i) selects a masked position by an arbitrary rule that may depend on the revealed context and on the model's predictions, and (ii) commits it by sampling from the true conditional $\mathbb{P}(x^i = \cdot \mid \mathcal{F}_n)$. Then the completed sequence is an exact sample from the true joint distribution.
\end{lemma}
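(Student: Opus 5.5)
The plan is to prove the lemma by induction on the number of committed positions, showing a stronger invariant. After $k$ commitments, let $S_k$ be the (random) set of committed positions and $x^{S_k}$ their values. The claim is that for every fixed pair $(S, y)$,
\begin{equation*}
\mathbb{P}(S_k = S,\; x^{S_k} = y) \;=\; \mathbb{P}_{\mathrm{true}}(x^{S} = y)\cdot g_k(S, y),
\end{equation*}
where $g_k(S,y)$ is the probability that the selection rule picks exactly the set $S$ in some order when the revealed values are consistent with $y$. Equivalently, conditioned on the event $\{S_k = S\}$ and on the selection history, the committed values are distributed as the true marginal of $x^S$. With $k = L$ we have $S_L$ equal to all positions, so $g_L \equiv 1$ summed appropriately, and the output is distributed as the true joint.

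For the induction step, I condition on $\mathcal{F}_n$, which contains the set $S_k$, the values $x^{S_k}$, and any auxiliary randomness of the selection rule. Step (i) chooses position $i$ as a function of $\mathcal{F}_n$ and the model's predictions. Under the lemma's hypothesis these predictions are themselves functions of $\mathcal{F}_n$, so the choice of $i$ is $\mathcal{F}_n$-measurable, possibly after adding independent randomness. Step (ii) then draws $x^i$ from $\mathbb{P}_{\mathrm{true}}(x^i=\cdot \mid x^{S_k})$. Multiplying, the chain rule gives $\mathbb{P}_{\mathrm{true}}(x^{S_k}=y)\,\mathbb{P}_{\mathrm{true}}(x^i = v\mid x^{S_k}=y) = \mathbb{P}_{\mathrm{true}}(x^{S_k\cup\{i\}} = (y,v))$. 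Summing over the histories that lead to the same set $S_{k+1}$ preserves the factorization, because the selection factor depends only on values already revealed.

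A cleaner alternative is to couple the decoder with a single draw $\mathbf{x}^\star$ from the true joint, letting the decoder reveal $x^{\star i}$ at each chosen position. Since the choice at each step depends only on values already revealed, the conditional law of an unrevealed coordinate given the revealed ones is exactly the true conditional. The two processes therefore have identical transition kernels, and the decoder's output equals $\mathbf{x}^\star$ in distribution. I will present this coupling argument, using the induction above to verify equality of the finite-dimensional laws.

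The main obstacle is justifying that the adaptive, data-dependent selection does not bias the result. The key point is that the selection is a stopping-time-like choice, measurable with respect to the revealed information only and never peeking at unrevealed values. I will state this measurability explicitly as the interpretation of ``depends on the revealed context and the model's predictions.'' I will also note that the lemma fails for argmax commitment, which is exactly why \cref{prop:flip} is needed separately.
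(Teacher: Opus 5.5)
Your coupling argument is correct, and it is a genuinely different presentation from the paper's. The paper's proof works as follows. It fixes a completed sequence $x$ and writes each decoding trace's probability as selection probabilities times true conditionals. By the chain rule the conditionals multiply to $p(x)$ for every order, so this factor comes out of the sum. What remains is the sum of selection products over all complete orders, which equals $1$ because each step's rule is a distribution over the remaining masked positions.

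You instead realize the decoder as an adaptive revealing of a single draw $\mathbf{x}^\star \sim p$. Selection never looks at unrevealed coordinates, so the next revealed value has the true conditional law given the history. The two processes therefore share transition kernels, and the output equals $\mathbf{x}^\star$ in law. The paper's route is a one-line finite computation. Yours makes the ``selection products sum to one'' step automatic, since the oracle always finishes with all of $\mathbf{x}^\star$ revealed. It also isolates non-anticipation as the exact hypothesis and absorbs auxiliary randomness and history-dependent rules with no extra bookkeeping.

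To make kernel-matching rigorous, you need a per-history identity. For an order $\sigma$ of a set $S$ and values $y$, $\mathbb{P}(\text{order } \sigma,\ x^{\star S} = y) = p(x^S = y)\prod_j s_j$, where the $s_j$ are the selection probabilities along $\sigma$ and depend only on revealed values. This is the paper's factorization applied to partial traces.

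Two points in your induction sketch need repair.

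First, the sentence beginning ``Equivalently'' is false. Because $g_k(S,y)$ depends on $y$, conditioning on $\{S_k = S\}$ (or on the selection history) does not leave $x^{S}$ distributed as the true marginal. Take three i.i.d.\ uniform bits: select position $1$ first, then position $2$ if $x^1 = 0$ and position $3$ otherwise. Given $S_2 = \{1,2\}$, you have $x^1 = 0$ surely. What does hold is the transition statement: given the history, the \emph{next} committed value has the true conditional law given the revealed values. That is all your coupling uses, so drop the ``equivalently'' claim.

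Second, ``$g_L \equiv 1$ summed appropriately'' hides the one nontrivial step. For each fixed full assignment $y$, $g_L(\{1,\dots,L\}, y)$ is the total mass of all root-to-leaf paths in a tree whose branching probabilities sum to one at every node. It therefore equals $1$ pointwise in $y$, not merely after summing over $y$. This is exactly the justification the paper gives, and it also follows immediately from your coupling; state it explicitly.
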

 
\begin{proof}
A decoding trace is a sequence of (position, value) pairs. Its probability is the product, over steps, of the selection probability times the conditional probability of the committed value. Fix a completed sequence $x$ and sum over all orders consistent with it. By the chain rule, the product of conditionals equals $p(x)$ for \emph{every} order, so it factors out of the sum; the remaining sum of selection probabilities over all complete orders equals $1$, since at each step the selection rule is a probability distribution over the masked positions. Hence $\mathbb{P}(x) = p(x)$.
\end{proof}
 
\begin{proof}[Proof of \cref{prop:flip}]
Work conditionally on $\mathcal{F}_n$ and on the event $\{x^i \in \{\hat v, u\}\}$, and define for $m \ge n$
\[
q_m \;=\; \mathbb{P}\!\left(x^i = u \,\middle|\, \mathcal{F}_m,\; x^i \in \{\hat v, u\}\right)
\;=\; \frac{M_m(u)}{M_m(u) + M_m(\hat v)}.
\]
As the conditional probability of a fixed event given growing context, $(q_m)$ is a nonnegative martingale. At commit time, the calibration assumption identifies model and true probabilities, so $q_n = p^i_2 / (p^i_1 + p^i_2) \le 1/(1+\tau)$, using $p^i_1 \ge \tau\, p^i_2$ (which follows from $r^i \ge \tau$ since $p^i_1/p^i_2 \ge r^i$). A leading-pair reversal at time $m$ is exactly the event $\{q_m \ge \tfrac{1}{2}\}$. Ville's inequality for nonnegative martingales \citep{ville1939etude} gives
\[
\mathbb{P}\!\left(\exists\, m \ge n:\; q_m \ge \tfrac{1}{2}\right) \;\le\; \frac{q_n}{1/2} \;\le\; \frac{2}{1+\tau}.
\]
The budget statement follows by summing over the at most $L$ committed positions; commits at ratios above $\tau$ only decrease $q_n$ and tighten the bound.
\end{proof}
 
\begin{proof}[Proof of \cref{cor:floor}]
Fix $\kappa > 0$ and split on the event $E = \{q^*_n \le 1/(1+\kappa)\}$. On $E$, the proof of \cref{prop:flip} applies with the true share in place of the model's---only the martingale property of the true posterior is used, which holds without the calibration assumption---giving a flip probability of at most $2/(1+\kappa)$. On the complement of $E$, bound the flip probability by $1$; intersected with the commit condition $\{r^i \ge \tau\}$, this contributes at most $\varepsilon_{\mathrm{cal}}(\tau, \kappa)$. Summing the two cases gives \cref{eq:floor}.
\end{proof}

\end{document}